 \LetLtxMacro{\oldtextsc}{\textsc}
 \renewcommand{\textsc}[1]{\oldtextsc{\scalefont{1.10}#1}}
\definecolor{shadecolor}{gray}{0.9}
\theoremstyle{definition}
\newtheorem{theorem}{Theorem}
\DeclareMathOperator*{\argmax}{arg\,max}
\DeclareMathOperator*{\argmin}{arg\,min}
 \newacronym{KL}{kl}{Kullback-Leibler}
\newacronym{ELBO}{elbo}{\emph{evidence lower bound}}
\newacronym{POPELBO}{pop-elbo}{\emph{population evidence lower bound}}
\newacronym{SVI}{svi}{stochastic variational inference}
\newacronym{BUMPVI}{bump-vi}{bumping variational inference}
\newacronym{GMM}{gmm}{Gaussian mixture model}
\newacronym{LDA}{lda}{latent Dirichlet allocation}
\newacronym{SUTVA}{sutva}{stable unit treatment value assumption}
\newacronym{TBIP}{tbip}{text-based ideal point model}
\title{Rationales for Sequential Predictions}
\author{Keyon Vafa \\
  Columbia University \\
  \texttt{keyon.vafa@columbia.edu} \\\And
  Yuntian Deng \\
  Harvard University \\
  \texttt{dengyuntian@seas.harvard.edu} \\\AND
  David M. Blei \\ 
  Columbia University \\
  \texttt{david.blei@columbia.edu} \\ \And
  Alexander M. Rush \\ 
  Cornell Tech \\
  \texttt{arush@cornell.edu}
}
\begin{document}
\maketitle
\begin{abstract}

Sequence models are a critical component of modern NLP systems, but
their predictions are difficult to explain. We consider model
explanations though \textit{rationales}, subsets of context that can
explain individual model predictions. We find sequential rationales
by solving a combinatorial optimization: the best rationale is the
smallest subset of input tokens that would predict the same output as
the full sequence. Enumerating all subsets is intractable, so we
propose an efficient greedy algorithm to approximate this objective.
The algorithm, which is called greedy rationalization, applies to any
model. For this approach to be effective, the model should form
compatible conditional distributions when making predictions on
incomplete subsets of the context. This condition can be enforced
with a short fine-tuning step. We study greedy rationalization on
language modeling and machine translation. Compared to existing
baselines, greedy rationalization is best at optimizing the
combinatorial objective and provides the most faithful rationales. On a
new dataset of annotated sequential rationales, greedy rationales are
most similar to human rationales.

\end{abstract}
\section{Introduction}

Sequence models are a critical component of generation tasks ranging from
language modeling 
to machine translation 
to summarization. 
These tasks are dominated by
complex neural networks. While these models produce
accurate predictions, their decision making processes are hard to
explain. Interpreting a model's prediction is important in a variety of
settings: a researcher needs to understand a model to debug it; a
doctor using a diagnostic model requires justifications to validate a decision; 
a company deploying a language model relies on model explanations to
detect biases appropriated from training data.

\begin{figure}
  \includegraphics[width=0.448\textwidth]{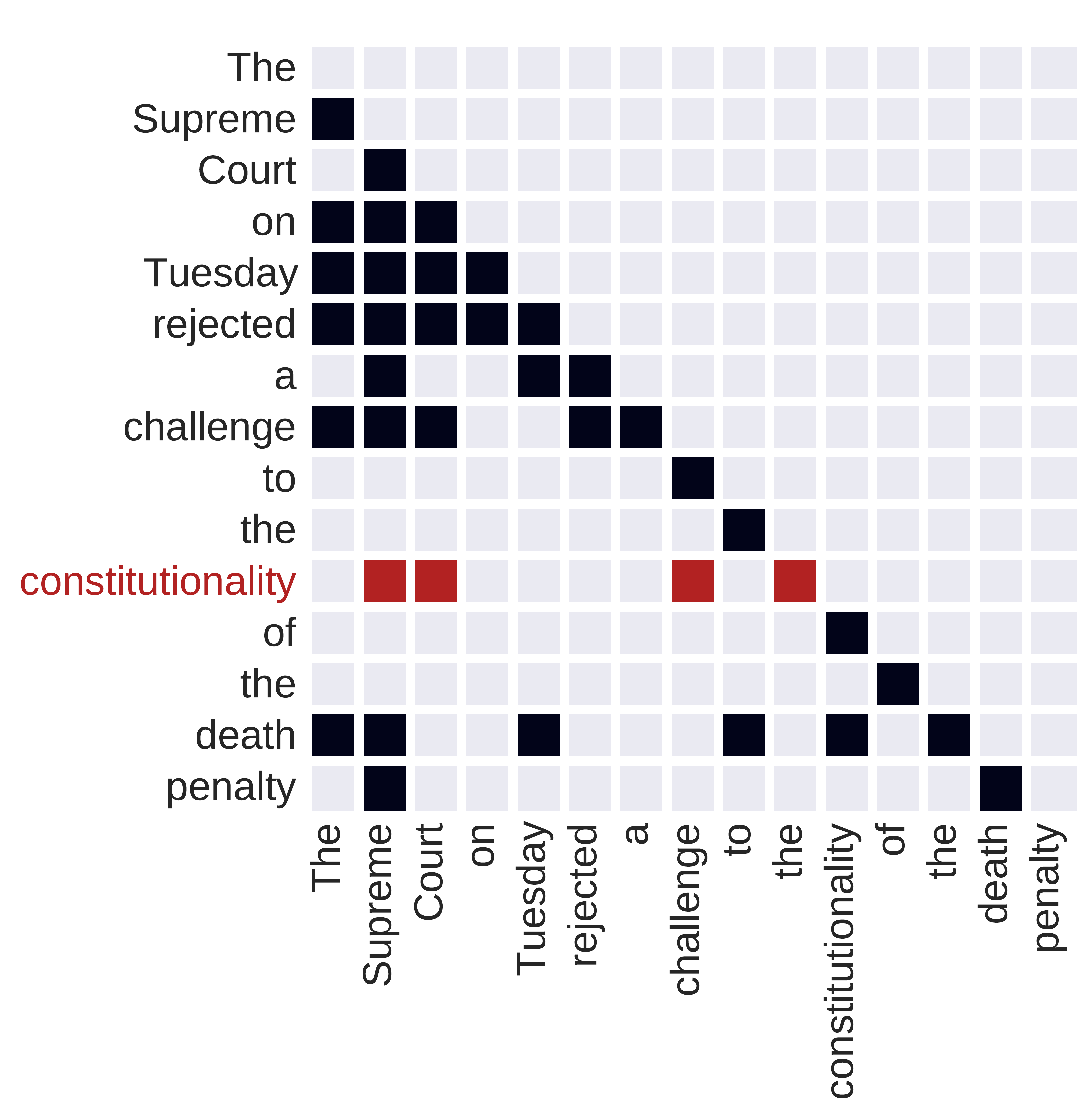}
  \caption{
  Rationales for sequential prediction on GPT-2. 
  Each row is a predicted word. The dark cells correspond to the context words found by greedy rationalization. To predict ``constitutionality'', the model only needs ``Supreme'', ``Court'', ``challenge'', and ``the''.}

  \label{fig:intro_rationale_example}
\end{figure}

Interpretation takes many flavors~\citep{lipton2018mythos}. We focus
on \textit{rationales}, i.e. identifying the most important subset of
input tokens that leads to the model's prediction.  For example,
consider the sentence: ``The Supreme Court on Tuesday rejected a
challenge to the constitutionality of the death penalty.'' Suppose we would like to explain the
decision of the model to generate ``constitutionality''. While the
model mathematically conditions on all the previous words, only some are necessary for 
its predictions. In this case, the rationale produced by our algorithm includes
``the'', ``challenge'', and notably ``Supreme Court'', but not phrases that add no information like ``on Tuesday'' (\Cref{fig:intro_rationale_example}).

Various rationale methods
have been proposed for sequence classification, where
each sequence has a single rationale \citep{lei2016rationalizing,chen2018learning,jain2020learning}. However, these methods
cannot scale to sequence models, where each token in a 
sequence requires a different rationale.

This work frames the problem of finding sequence rationales as a combinatorial
optimization: given a model, the best rationale is the smallest subset of input
tokens that would predict the same token as the full
sequence. Finding the global optimum in this setting is intractable,
so we propose \textbf{greedy rationalization}, a greedy algorithm that iteratively builds
longer rationales. This approach is efficient for many NLP models such
as transformers ~\citep{vaswani2017attention}. 
Moreover, it does not require access to the inner workings of a model, such as gradients.

Underlying this approach is an assumption that the model forms sensible predictions for incomplete subsets of the input.
Although we can pass in incomplete subsets to 
neural models, there is no guarantee that their predictions on these subsets will be compatible with their predictions on full contexts \citep{arnold1989compatible}. We show that compatibility can be learned by conditioning on randomly sampled context subsets while training a model. For large pretrained models like GPT-2 \citep{radford2019language}, fine-tuning is sufficient. 

In an empirical study, we compare greedy rationalization to various gradient- and
attention-based explanation methods on language modeling and machine
translation. Greedy rationalization best optimizes the
objective, and its rationales are most faithful to the inner workings
of the model. We additionally create a new dataset of annotated
rationales based on the Lambada corpus~\citep{paperno2016lambada}. We
find that greedy rationales are most similar to human annotations,
both on our dataset and on a labeled dataset of translation
alignments. 

Our code and annotated dataset are available.\footnote{\url{https://github.com/keyonvafa/sequential-rationales}}

 \section{Sequential Rationales}

Consider a sequence of tokens, $y_{1:T}$, generated by some unknown 
process $y_{1:T} \sim F$.
The goal of sequence modeling is to learn a probabilistic model $p_\theta$ that approximates $F$ from samples. Maximum-likelihood estimation is an effective way to train these models, where $\theta$ is fit according to
\begin{equation}
\label{eqn:setup_empirical_risk}
\argmax_\theta \mathbb{E}_{y_{1:T} \sim F}[\log p_\theta(y_{1:T})]. 
\end{equation}
Sequence models are typically factored into conditional distributions:
\begin{equation}
\label{eqn:chain_rule}
p_\theta(y_{1:T}) = f_\theta(y_1) \prod_{t=2}^T f_\theta(y_t|y_{<t}).
\end{equation}
Here,
$f_\theta$ is the specific model parameterizing $p_\theta$, such as a transformer \citep{vaswani2017attention}, and is trained to take inputs $y_{<t}$. Going forward, we drop the dependence on $\theta$ in the notation.

Word-level explanations are a natural way to interpret a
sequence model: which words were instrumental for predicting a particular word? Would the same word have been predicted if some of the words had been missing?

Explanations may be straightforward for simpler models; for example, a bigram Markov model uses only the previously generated word to form predictions. 
However, the most effective sequence models have been based on neural networks, whose predictions are 
challenging to interpret \citep{lipton2018mythos}.

Motivated by this goal, we consider a sequence $y_{1:T}$ generated by a sequence model $p$. At each position $t$, the model takes the inputs in the context $y_{<t}$ and uses them to predict $y_t$. We are interested in forming \textit{rationales}: subsets of the contexts that can explain the model's prediction of $y_t$.\footnote{Our paradigm and method extend easily to conditional sequence models, such as those used for machine translation. 
For full details, refer to \Cref{sec:appendix_alg_details}.}

What are the properties of a good rationale? Any of the contextual words $y_{<t}$ can contribute to $y_t$. However, if a model makes the same prediction with only a subset of the context, that subset contains explanatory power on its own. A rationale is \textit{sufficient} if the model would produce the same $y_t$ having seen only the rationale \citep{deyoung2019eraser}. While rationales consisting of the full context would always be sufficient, they would be ineffective for explaining longer sequences. Intuitively, the smaller the rationale, the easier it is to interpret, so we also prioritize \textit{brevity}.

We combine these desiderata and frame finding rationales as a combinatorial optimization: 
the best rationale of a word $y_t$ is the smallest subset of inputs that would lead to the same prediction. 
Each candidate rationale $S$ is an index set, and $y_S$ denotes the subset of tokens indexed by $S$. 
Denote by $\mathcal S = 2^{[t-1]}$ the set of all possible context subsets. An optimal rationale is given by
\begin{equation}
\label{eqn:combinatorial_rationale}
\argmin_{S \in \mathcal{S}} |S| \ \  \text{s.t.} \ \ \argmax_{y_t'} p(y_t' | y_{S}) = y_t.
\end{equation}
The constraint guarantees sufficiency, and the objective targets brevity. Although the objective may have multiple solutions, we only require one.

Optimizing \Cref{eqn:combinatorial_rationale} is hindered by a pair of
computational challenges. The first challenge is that solving this
combinatorial objective is intractable; framed as a decision
problem, it is NP-hard. We discuss this challenge in
\Cref{sec:greedy}. The second challenge is that evaluating distributions
conditioned on incomplete context subsets $p(y_t'|y_S)$ involves an intractable
marginalization over 
missing tokens. For now we assume that $f(y_t'|y_S) \approx p(y_t'|y_S)$; we discuss how to enforce this condition in \Cref{sub:modularity}.

 \section{Greedy Rationalization}
\label{sec:greedy}

\begin{figure*}
   \makebox[\textwidth][c]{\includegraphics[width=0.95\textwidth]{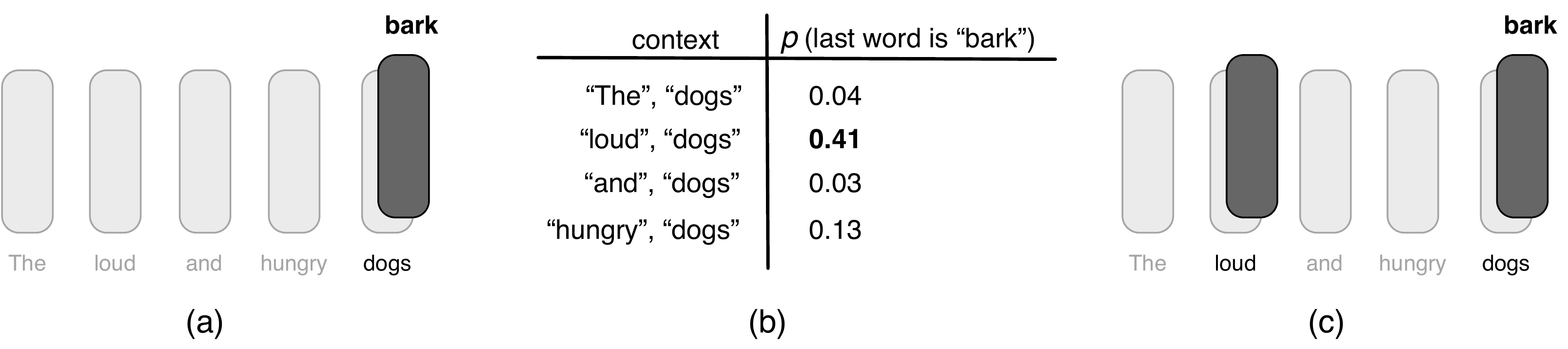}}
  \caption{One step of greedy rationalization. In (a), the rationale so far is a single word, ``dogs''. In (b), each candidate token is considered and ``loud'' results in the best probability for ``bark''. In (c), the token ``loud'' is added to the rationale. This process repeats until the most likely word is the  model prediction.}
  \label{fig:greedy_search}
\end{figure*}

We propose a simple greedy algorithm, \textbf{greedy rationalization}, to approximate the solution to \Cref{eqn:combinatorial_rationale}. 
The algorithm starts with an empty rationale. At each step, it considers adding each possible token, and it selects the one that most increases the probability of $y_t$. This process is repeated until the rationale is sufficient for predicting $y_t$. 
\Cref{fig:greedy_search} provides an overview. 

Here is the algorithm. Begin with a rationale $S^{(0)} = \emptyset$. Denoting by $[t-1] = \{1,\dots, t-1\}$, the first rationale set is
\begin{equation}
\label{eqn:distillation_step_1}
S^{(1)} = \argmax_{k \in [t-1]} p(y_t | y_k).
\end{equation}
At each step, we iteratively add a single word to the rationale, choosing the one that maximizes the probability of the word $y_t$:
\begin{equation}
\label{eqn:distillation_step_n}
S^{(n+1)} = S^{(n)} \ \cup \argmax_{k \in [t-1] \setminus S^{(n)}} p(y_t | y_{S^{(n)} \cup k}).
\end{equation}
We continue iterating \Cref{eqn:distillation_step_n} until $\argmax_{y'_t} p(y'_t | y_{S^{(n)}}) = y_t$. The procedure will always converge, since in the worst case, $S^{(t-1)}$ contains the full context.

The greedy approach is motivated by approximations to the set cover problem~\citep{chvatal1979greedy}. In our setting, each set is a single context token, and a rationale ``covers'' a sequence if it results in predicting the same token.

This procedure is simple to implement, and it is black-box: it does not require access to the inner workings of a model, like gradients or attention.

While greedy rationalization can be applied to any model, greedy
rationalization is particularly effective for set-based models such as
transformers. If we assume the rationale size $m=|S|$ is significantly
shorter than the size of the context $t$, 
greedy rationalization 
requires no extra asymptotic complexity beyond the cost of a single evaluation.

For transformers, the complexity of each evaluation $f(y_t| y_{<t})$ is quadratic in the input set $O(t^2)$. Each step of greedy rationalization requires evaluating $f(y_t| y_{S})$, but $y_S$ can be significantly smaller than $y_{<t}$. A rationale of size $m$ will require $m$ steps of $O(t)$ evaluations to terminate, resulting in a total complexity of $O(m^3t)$. As long as $m = O(t^{1/3})$, greedy rationalization 
can be performed with the same asymptotic complexity as evaluating a transformer on the full input, $O(t^2)$. 
In \Cref{sec:appendix_computational_comparisons}, we verify the efficiency of greedy rationalization.

 \section{Model Compatibility}
\label{sub:modularity}

Greedy rationalization requires computing conditional distributions $p(y_t|y_S)$ for arbitrary subsets $S$. Using an autoregressive model, this calculation requires marginalizing over unseen positions. For example, rationalizing a sequence $y_{1:3}$ requires evaluating the candidate rationale $p(y_3|y_1)$, which marginalizes over the model's predictions:
\begin{align*}
\label{eqn:marginalization_2}
p(y_3|y_1) = \sum_k f(y_3| y_1, y_2=k) f(y_2=k|y_1).
\end{align*}

Given the capacity of modern neural networks, it is tempting to pass in incomplete subsets $y_S$ to $f$ and evaluate this instead as $f(y_t|y_S) \approx p(y_t|y_S)$. However, since $f$ is trained only on complete feature subsets $y_{<t}$, incomplete feature subsets $y_S$ are out-of-distribution \citep{hooker2018benchmark}. Evaluating $f(y_3| y_1)$ may be far from the true conditional $p(y_3|y_1)$. In \Cref{fig:gpt_the}, we show that indeed language models like GPT-2 produce poor predictions on incomplete subsets.

\subsection{Fine-tuning for Compatibility}
Ideally $f(y_t|y_S)$ approximates $p(y_t|y_S)$, a property known as \textit{compatibility} \citep{arnold1989compatible}. Since training with \Cref{eqn:setup_empirical_risk} only evaluates $f$ on complete contexts $y_{<t}$, its behavior on incomplete contexts $y_S$ is unspecified. Instead, compatibility can be obtained 
by training to maximize
\begin{equation}
\label{eqn:word_dropout_objective}
\mathbb{E}_{y_{1:T} \sim F} \mathbb{E}_{S \sim \text{Unif}(\mathcal{S})} \left[\textstyle\sum_{t=1}^T \log f(y_t | y_{S_{<t}})\right],
\end{equation}
where $S \sim \text{Unif}(\mathcal S)$ indicates sampling word subsets uniformly at random from the power set of all possible word subsets, and $S_{<t}$ denotes the indices in $S$ that are less than $t$. 
\citet{jethani2021have} show that the optimum of \Cref{eqn:word_dropout_objective} is the distribution whose conditional distributions are all equal to the ground-truth conditionals.

We approximate \Cref{eqn:word_dropout_objective} with word dropout. In practice, we combine this objective with standard MLE training to learn compatible distributions while maintaining the performance of the original model. The word dropout distribution in \Cref{eqn:word_dropout_objective} is heavily skewed towards contexts containing half the words in the sequence. To alleviate this problem, we modify the word dropout distribution to sample subsets of varying lengths;
see \Cref{sec:appendix_finetuning}.

The intuition for \Cref{eqn:word_dropout_objective} is straightforward: if the model sees incomplete contexts while training, it can approximate arbitrary incomplete distributions. Since $f(y_t|y_S)$ approximates $F(y_t|y_S)$ and $f(y_t|y_{<t})$ approximates $F(y_t|y_{<t})$, all the conditional distributions are compatible.

\begin{figure}
   \includegraphics[width=0.47\textwidth]{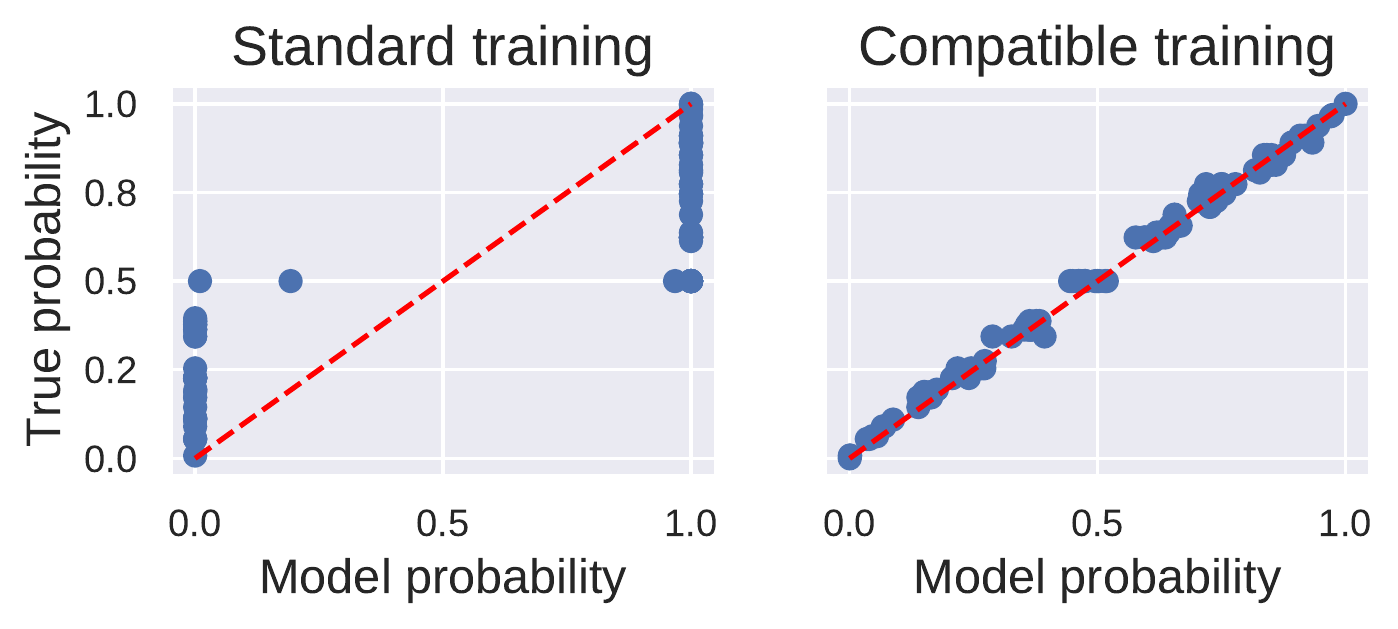}
  \caption{Training with word dropout (right) results in compatible predictions for the majority-class synthetic language. The optimal compatibility is the 
  dashed line.}
  \label{fig:majority_class_calibration}
\end{figure}

\subsection{Compatibility Experiments}

To demonstrate the impact of training with the compatibility objective in \Cref{eqn:word_dropout_objective}, we consider a
synthetic majority-class language over binary strings of 19 tokens. The first 17 are
sampled uniformly from $\{0, 1\}$, and the 18th token is always
`='. The 19th token is $0$ if there are more $0$'s than $1$'s in the
first 17 tokens, and $1$ otherwise.

We train two models: one using the standard objective in \Cref{eqn:setup_empirical_risk}, the other using word dropout to optimize \Cref{eqn:word_dropout_objective}.
Although both models have the same heldout perplexity on the full context, training with \Cref{eqn:word_dropout_objective} is required to form compatible predictions on incomplete subsets. 
In \Cref{fig:majority_class_calibration}, we provide both models with random subsets $S$ and calculate each model's probability that
the last token is $1$. 
A model
that has only seen a few tokens should be less confident about the
prediction of the final majority class, yet models trained without word dropout ignore this uncertainty.

Models do not need to be trained from scratch with \Cref{eqn:word_dropout_objective}. A model can be pre-trained with \Cref{eqn:setup_empirical_risk}, after which it can be fine-tuned for compatibility. As an example, when GPT-2 is not trained with word dropout, it makes insensible predictions for out-of-distribution sequences. For a sequence that contains only the token ``the'', GPT-2 is trained to give reasonable predictions for $p(y_2| y_{1} = \text{``the''})$. But when it has only seen the token ``the'' somewhere besides the first position of the sequence, the top prediction for the word after ``the'' is also ``the''.\footnote{We represent ``the'' at various positions by changing the positional encoding passed into the transformer.} 
Of course, following ``the'' with ``the'' is not grammatical. 
Fine-tuning for compatibility alleviates this problem (\Cref{fig:gpt_the}).

\begin{figure}
  \includegraphics[width=0.47\textwidth]{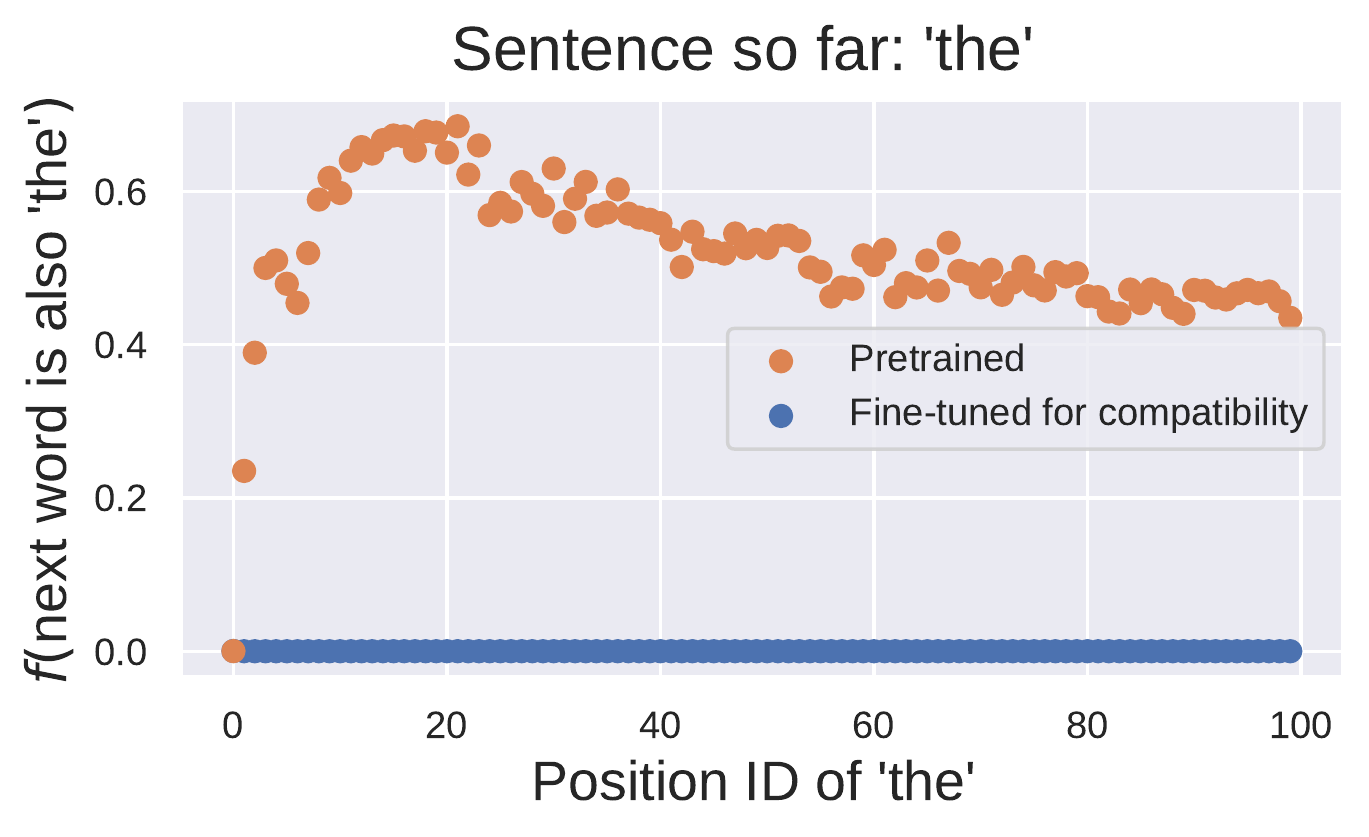}
  \caption{Fine-tuning GPT-2 for compatibility removes pathological repeating on incomplete contexts. For a position $t$, the vertical axis gives $f(y_{t+1}= \text{``the''} | y_{t} = \text{``the''})$.} 
  \label{fig:gpt_the}
\end{figure}

Finally, we find that that fine-tuning for compatibility does not hurt the heldout performance of the complete conditional distribution of each fine-tuned model (see \Cref{sec:appendix_finetuning}).

\section{Connection to Classification Rationales}

In this section, we go over related rationalization approaches developed for 
classification and discuss why they cannot scale to sequence models.
We also show that the combinatorial rationale objective in \Cref{eqn:combinatorial_rationale} is a global solution to a classification rationale-style objective.

In classification problems, a sequence $x_{1:T}$ is associated with a label $y$.
Rationale methods are commonly used in this setting \citep{lei2016rationalizing,chen2018learning,yoon2018invase,bastings2019interpretable,jain2020learning,jethani2021have}.
The most common approach uses two models: one, a selection model $q(S|x_{1:T})$, provides a distribution over possible rationales; the other, the predictive model $p(y|x_S)$, makes predictions using only samples from the former model. Typically, $p$ and $q$ are both optimized to maximize
\begin{equation}
\label{eqn:general_rationale}\mathbb{E}_{x,y \sim F}\mathbb{E}_{S \sim q(S|x,y)}[\log p(y|x_S) - \lambda |S|].
\end{equation}
Here, $F$ is the ground truth, unknown data distribution, and $\lambda$ is a regularizing penalty that encourages smaller rationales.

In practice, it is infeasible to adopt this objective for sequence models. \Cref{eqn:general_rationale} is centered on providing classification models with only the words in a sequence's rationale. In sequential settings, each word has a different rationale. Since sequence models make $T$ predictions per sequence and are trained by sharing all $T$ word representations, each token would be indirectly exposed to words in the rationales of the words it is allowed to use. A remedy would be to train sequence models without sharing representations, but this is computationally infeasible; it requires $O(T^3)$ computations per sequence for transformer architectures.

Most classification rationale methods treat $q(S|x_{1:T})$ as a probability distribution over all possible rationales. However, the $q$ that maximizes \Cref{eqn:general_rationale} is deterministic for any $p$. To see this, note that $q$ does not appear inside the expectation in \Cref{eqn:general_rationale}, so it can place all its mass on a single mode. We provide a formal justification in \Cref{sec:appendix_deterministic_proof}.

Since the optimal selection model $q$ is a point-mass, the optimal rationale can be written as
\begin{equation}
\label{eqn:soft_contrained_search}
\argmin_{S \in \mathcal S} \ \lambda |S| - \log p(y | x_S).
\end{equation}
This optimization is identical to the combinatorial optimization in \Cref{eqn:combinatorial_rationale}, albeit with a soft constraint on the rationale's prediction: the true label $y$ is not required to be the maximum of $p(y'|x_S)$. 
In practice, this soft constraint sometimes results in 
empty rationales \citep{jain2020learning}. Since we view sufficiency as a key component of a good rationale, \Cref{eqn:combinatorial_rationale} imposes a hard constraint on the rationale's prediction.

 \section{Related Work}
\label{sec:relwork}

Finding rationales is similar to feature selection. While global feature selection has been a well-studied problem in statistics \citep{guyon2003introduction,hastie2009elements}, 
instance-wise feature selection --- where the goal is selecting features per-example --- is a newer research area \citep{chen2018learning}. We review local explanation methods used for NLP.

\paragraph{Gradient saliency.}

Gradient-based saliency methods have long been used as a measure of feature importance in machine learning \citep{baehrens2010explain,simonyan2013deep,li2015visualizing}. Some variations involve word embeddings \citep{denil2014extraction}; integrated gradients, to improve sensitivity \citep{sundararajan2017axiomatic}; and relevance-propagation to track each input's contribution through the network \citep{bach2015pixel,voita2020analyzing}. 

But there are drawbacks to using gradient-based methods as explanatory tools. \citet{sundararajan2017axiomatic} show that in practice, gradients are \textit{saturated}: they may all be close to zero for a well-fitted function, and thus not reflect importance. 
Adversarial methods can also distort gradient-based saliences while keeping a model's prediction the same 
\citep{ghorbani2019interpretation,wang2020gradient}. We compare greedy rationalization to gradient saliency methods in \Cref{sec:results}.

\paragraph{Attention.}

Recently, NLP practitioners have focused on using attention weights as explanatory tools. 
The literature has made a distinction between \textit{faithfulness} and \textit{plausibility}. An explanation is faithful if it accurately depicts how a model makes a decision \citep{jacovi2020towards}; an explanation is plausible if it can be understood and interpreted by humans \citep{wiegreffe2019attention}. Practitioners have shown that attention-based explanations are generally not faithful \citep{jain2019attention,serrano2019attention}, but that they may be plausible \citep{wiegreffe2019attention,mohankumar2020towards,vashishth2019attention}. Others show that attention weights should not be interpreted as belonging to single tokens since they mix information across tokens 
\citep{brunner2019identifiability,kobayashi2020attention}. \citet{bastings2020elephant} argue that general input saliency measures, such as gradients, are better suited for explainability than attention. 
We compare greedy rationalization to attention-based methods in \Cref{sec:results}.

\paragraph{Local post-hoc interpretability.}
Another class of methods provides local interpretability for pretrained models. These approaches aim to explain a model's behavior for a single example or for a small subset of inputs. LIME \citep{ribeiro2016should} trains an interpretable model that locally approximates the pretrained model. \citet{alvarez2017causal} learn a causal relationship between perturbed inputs and their model outputs. 
These methods impose no constraints on the pretrained model. 
However, they are expensive -- they require training separate models for each input region. In contrast, the method proposed here, greedy rationalization, can efficiently explain many predictions.

\paragraph{Input perturbation.}

Practitioners have also measured the importance of inputs by perturbing them \citep{zeiler2014visualizing,kadar2017representation}. Occlusion methods \citep{li2016understanding} replace an input with a baseline (e.g. zeros), while omission methods \citep{kadar2017representation} remove words entirely. \citet{li2016understanding} propose a reinforcement learning method that aims to find the minimum number of occluded words that would change a model's prediction.
\citet{feng2018pathologies} use gradients to remove unimportant words to see how long it takes for the model's prediction to change. They find that the remaining words are nonsensical and do not comport with other saliency methods. 
Others have shown that input perturbation performs worse than other saliency methods in practice \citep{poerner2018evaluating}. 
These methods have mostly focused on subtractive techniques. For this reason, they are inefficient and do not aim to form sufficient explanations. In contrast, greedy rationalization efficiently builds up sufficient explanations.

 \section{Experimental Setup}

\label{sec:empirical_studies}

There are two goals in our empirical studies. The first is to compare the ability of greedy rationalization to other approaches for optimizing the combinatorial objective in \Cref{eqn:combinatorial_rationale}. The second is to assess the quality of produced rationales.

We measure the quality of rationales using two criteria: faithfulness and plausibility. 
An explanation is faithful if it accurately depicts how a model makes a decision \citep{jacovi2020towards}; an explanation is plausible if it can be understood and interpreted by humans \citep{wiegreffe2019attention}.
Although sufficiency is a standard way to measure faithfulness \citep{deyoung2019eraser}, all the rationales that satisfy the constraint of \Cref{eqn:combinatorial_rationale} are sufficient by definition. To measure plausibility, we compare rationales to human annotations. Since there do not exist language modeling datasets with human rationales, we collected annotations based on Lambada ~\citep{paperno2016lambada}. The annotated dataset is available online, along with the code used for all experiments.\footnote{\url{https://github.com/keyonvafa/sequential-rationales}}

We compare greedy rationalization to a variety of gradient- and attention-based baselines (see \Cref{sec:relwork}).
To form baseline sequential rationales, we add words by the order prescribed by each approach, stopping when the model prediction is sufficient. The baselines are: $l_2$ gradient norms of embeddings~\citep{li2015visualizing}, embedding gradients multiplied by the embeddings~\citep{denil2014extraction}, integrated gradients~\citep{sundararajan2017axiomatic}, attention rollout~\citep{abnar2020quantifying}, the last-layer transformer attention weights averaged-across heads, and all transformer attentions averaged across all layers and heads~\citep{jain2020learning}.

To compare rationale sets produced by each method to those annotated by humans, we use the set-similarity metrics described in \citet{deyoung2019eraser}: the intersection-over-union (IOU) of each rationale and the human rationale, along with the token-level F1, treating tokens as binary predictions (either in the human rationale or out of it).

We use transformer-based models for all of the experiments. We fine-tune each model for compatibility using a single GPU. That we can fine-tune GPT-2 Large \citep{radford2019language} to learn compatible conditional distributions on a single GPU suggests that most practitioners will be able to train compatible models using a reasonable amount of computation. For model and fine-tuning details, refer to \Cref{sec:appendix_finetuning}.

\section{Results and Discussion}
\label{sec:results}

The experiments test sequential rationales for language modeling and
machine translation. \Cref{sec:appendix_experimental_details} contains full details for each experiment.

\subsection{Language Modeling}

\paragraph{Long-Range Agreement.}
The first study tests whether rationales for language models
can capture long-range agreement. We create a template dataset using
the analogies from \citet{mikolov2013representations}. This dataset
includes word pairs that contain either a semantic or syntactic
relationship. For each type of relationship, we use a predefined template. It
prompts a language model to complete the word pair after it has seen
the first word.

For example, one of the fifteen categories is countries and their
capitals. We can prompt a language model to generate the capital by
first mentioning a country and then alluding to its capital. To test
long-range agreement, we also include a distractor sentence that
contains no pertinent information about the word pair. For example,
our template for this category is,

\begin{quote}
  \small
When my flight landed in \textbf{Japan}, I converted my currency and slowly fell asleep. (I had a terrifying dream about my grandmother, but that's a story for another time). I was staying in the capital, \_\_\_\_\_\_\_\_\_\_\_\_\_\_\_
\end{quote}
Here, the parenthetical clause is a distractor sentence, since it contains no relevant information about predicting the capital of Japan. The correct capital, ``Tokyo'', is predicted by GPT-2 
both with and without the distractor. We use this template for all of the examples in the country capital category, swapping the antecedent ``Japan'' for each country provided in \citet{mikolov2013representations}.

We feed the prompts to GPT-2, which completes each analogy.
To measure faithfulness, we calculate the percent of rationales that
contain the true antecedent, and the percent of rationales that do not
contain any words in the distractor. We only use examples where the
prediction is the same both with and without the distractor. We also
perform exhaustive rationale search on the objective in
\Cref{eqn:combinatorial_rationale}. This search is highly inefficient, so we
only complete it for 40 examples. To measure the approximation ratio,
we divide the size of the rationale found by each method by the 
exhaustive rationale size.

\Cref{tab:agreement_comparisons} contains the results on the compatible model. Although all methods contain the true antecedents in their rationales, greedy rationalization has by far the least distractors in its rationales. The rationales are also universally shorter for greedy rationalization and closer to the optimal rationales, justifying our greedy assumption. To show that fine-tuning GPT-2 for compatibility is not hurting the baselines, we also perform the baseline methods on a pretrained GPT-2 without fine-tuning; see \Cref{sec:appendix_experimental_details}.

\begin{table}
  \small
  \centering
  \begin{tabular}{ l  c c c  c }
  \toprule
  & Length & Ratio & Ante & No D \\
    \midrule
  Grad norms         & 22.5 & 4.1 & \textbf{1.0} & 0.06 \\
  Grad x emb         & 38.0 & 7.4 & 0.99 & 0.01 \\
  Integrated grads   & 28.1 & 5.2 & 0.99 & 0.00 \\
  Attention rollout  & 36.9 & 7.1 & \textbf{1.0} & 0.12 \\
  Last attention     & 16.7 & 2.9 & 0.99 & 0.13 \\
  All attentions     & 14.5 & 2.6 & \textbf{1.0} & 0.02 \\
  Greedy             & \textbf{7.1} & \textbf{1.2} & \textbf{1.0} & \textbf{0.43}  \\
    \bottomrule
 \end{tabular}
 \caption{Language modeling faithfulness on long-range agreement with templated analogies.  ``Ratio'' refers to the approximation ratio of each method's rationale length to the exhaustive search minimum. ``Ante'' refers to the percent of rationales that contain the true antecedent. ``No D'' refers to the percent of rationales that do not contain any tokens from the distractor.
  }
 \label{tab:agreement_comparisons}
\end{table}

\paragraph{Annotated Rationales.}

To test the plausibility of rationales for language models, we collect a dataset of human annotations. We base the collection on Lambada \citep{paperno2016lambada}, a corpus of narrative passages. Each passage included in Lambada is chosen so that humans need to use both local and global context to reliably predict the final word.
By its construction it is guaranteed to have non-trivial rationales.

Our goal is to collect rationales that are both minimal and sufficient for humans. We run an annotation procedure with two roles: a selector and a predictor. First, the selector sees the full passage and ranks the words in order of how informative they are for predicting the final word. Next, the predictor sees one word at a time chosen by the selector, and is asked to predict the final word of the passage. The words the predictor saw before guessing the correct word form a human rationale.
This rationale selection method is inspired by Rissanen Data Analysis \citep{rissanen1978modeling,perez2021rissanen}, which uses a minimum description length metric to estimate feature importances. 
We rely on human annotators to estimate information gains.

Since it could be trivial for humans to predict the final word if it also appears in the context, we 
only include examples that do not repeat a word. We collect annotations for 107 examples, which we also release publicly. We use two sets of annotators for 15\% of the examples in order to compute inter-annotator agreement. On this subset, the average token-level Cohen's $\kappa$ is 0.63 \citep{cohen1960coefficient}, indicating substantial agreement.

\begin{table}
  \small
  \renewcommand{\arraystretch}{1.0}  
  \centering
  \begin{tabular}{ l  c c c c}
   \toprule
   & Length & IOU & F1 \\
    \midrule
  Gradient norms           & 60.2 & 0.14 & 0.22 \\
  Gradient x embedding     & 68.3 & 0.12 & 0.21 \\
  Integrated gradients     & 62.8 & 0.12 & 0.21 \\
  Attention rollout        & 73.9 & 0.11 & 0.19 \\
  Last attention layer     & 54.6 & 0.15 & 0.25 \\
  All attention layers     & 48.7 & 0.20 & 0.28 \\
  Greedy                   & \textbf{17.9} & \textbf{0.25} & \textbf{0.35} \\
    \bottomrule
 \end{tabular}
 \caption{Language modeling plausibility on rationale-annotated Lambada.}
 \label{tab:lambada_comparisons}
\end{table}

We compare the rationales produced by each method to the annotated rationales. \Cref{tab:lambada_comparisons} shows that the greedy rationales are most similar to the human-annotated rationales. Greedy rationalization is also the most effective at minimizing the combinatorial objective in \Cref{eqn:combinatorial_rationale}, as its rationales are by far the shortest. \Cref{fig:lambada_example} contains examples of rationales for this dataset.

\begin{figure}
  \includegraphics[width=0.48\textwidth]{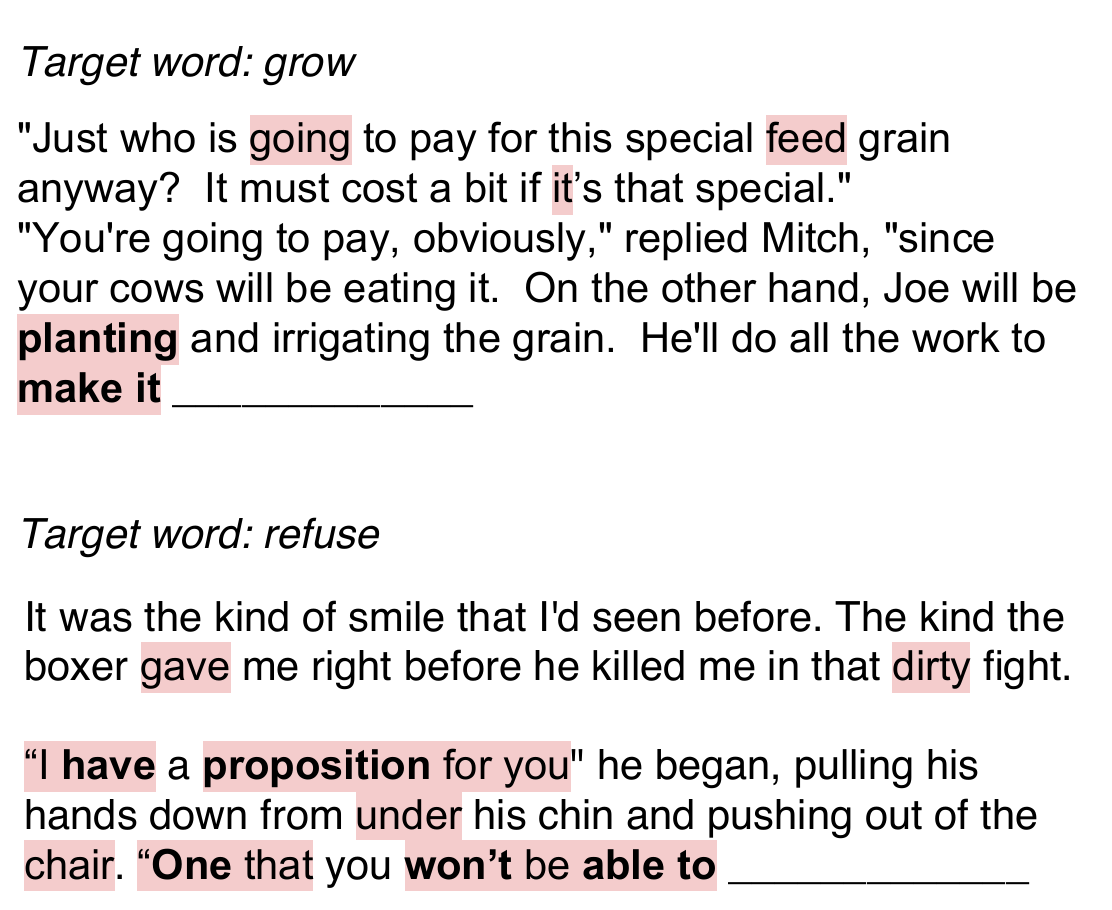}

  \caption{Examples from our annotated Lambada dataset. \hl{Highlighted text} denotes greedy rationales, and \textbf{bolded text} denotes human-annotated rationales.}
  \label{fig:lambada_example}
\end{figure}

It is worth noting that the top few words added by the baselines are quite relevant; after 5 tokens, the all-attention baseline has a better F1 and IOU than greedy rationalization. However, the baselines struggle to form sufficient rationales, which hurts their overall performance.

\subsection{Machine Translation}

\paragraph{Distractors.}
To measure faithfulness, we take a transformer trained on IWSLT14 De-En (and fine-tuned for compatibility), and generate translations for 1000 source sequences from the test set. We then create a corpus by concatenating random example pairs; for two sampled pairs of source and target sequences, $(S_1, T_1)$ and $(S_2, T_2)$, we create a new example $(S_1S_2, T_1T_2)$. Each token in $T_1$ is generated from $S_1$ alone, so its rationales shouldn't contain any tokens from $S_2$. Similarly, $T_2$ is generated from $S_2$ alone, so its rationales shouldn't contain any tokens from $S_1$ or $T_1$. 

We evaluate each rationale by counting how many times it has crossed over: a rationale for $T_1$ crosses over every time it contains a token in $S_2$, and a rationale for $T_2$ crosses over every time it contains a token in $S_1$ or $T_1$ (since the model is autoregressive, $T_1$'s rationales can never contain tokens from $T_2$).

\begin{table}
  \small
  \centering
  \begin{tabular}{ l  c c  c c }
  \multicolumn{1}{c}{} & \multicolumn{2}{c}{\textbf{Mean Crossovers}} &\multicolumn{2}{c}{\textbf{Crossover Rate}} \\  \toprule
          & Source & Target & Source & Target \\
    \midrule
  Grad norms         & 0.40 & 0.44 & \textbf{0.06} & 0.06  \\
  Grad x emb         & 6.25 & 5.57 & 0.42 & 0.41 \\
  Integrated grads   & 2.08 & 1.68 & 0.23 & 0.14 \\
  Last attention     & 0.63 & 2.41 & 0.09 & 0.24 \\
  All attentions     & 0.58 & 0.80 & 0.08 & 0.12 \\
  Greedy             & \textbf{0.12} & \textbf{0.12} & 0.09 & \textbf{0.02} \\
    \bottomrule
 \end{tabular}
 \caption{Translation faithfulness with distractors. ``Mean crossovers'' refers to the average number of crossovers per rationale, and ``Crossover rate'' refers to the fraction of rationales that contain at least one crossover.}
 \label{tab:ghost_comparisons}
\end{table}

\Cref{tab:ghost_comparisons} contains the results. Greedy rationalization has by far the fewest average number of crossovers per rationale. Although the percent of source rationales that cross over is slightly higher than the percent using gradient norms, the percentage on the target side is superior. 

\paragraph{Annotated Alignments.}
To test plausibility, we compare the rationales to word alignments \citep{brown1993mathematics}. Using a dataset containing 500 human-labeled alignments for German-English translation,\footnote{\url{https://www-i6.informatik.rwth-aachen.de/goldAlignment/}}
we compute rationales for each method using the ground truth targets. We measure similarity to the labeled rationales by computing alignment error rate (AER)~\citep{och2000improved}, along with computing the IOU and F1 between sets. To separate the requirement that the rationale be sufficient from each method's global ordering of tokens, we also compare top-1 accuracies, which measure whether the top token identified by each baseline is present in the labeled alignment set.

\begin{table}
  \small
  \centering
  \begin{tabular}{ l  c c c c  c}
  \toprule
          & Length & AER & IOU & F1 & Top1\\
    \midrule
  Grad norms        & 10.2 & 0.82 & 0.30 & 0.16   & 0.63\\
  Grad x emb & 13.2 & 0.90 & 0.16 & 0.12 & 0.40 \\
  Integrated grads & 11.3 & 0.85 & 0.24 & 0.14 & 0.42 \\
  Last attention & 10.8 & 0.84 & 0.27 & 0.15 & 0.59\\
  All attentions & 10.7 & 0.82 & 0.32 & 0.15 & \textbf{0.66} \\
  Greedy          & \textbf{4.9} & \textbf{0.78} & \textbf{0.40} & \textbf{0.24} & 0.64\\
    \bottomrule
 \end{tabular}
 \caption{
   Translation plausibility with annotated alignments.
   The first four columns correspond to using the full source rationale found by each method; the last column ``Top1'' refers to the accuracy of the first source token added by each  method. 
 AER refers to alignment error rate.}
 \label{tab:mt_comparisons}
\end{table}

\begin{figure}
  \includegraphics[width=0.47\textwidth]{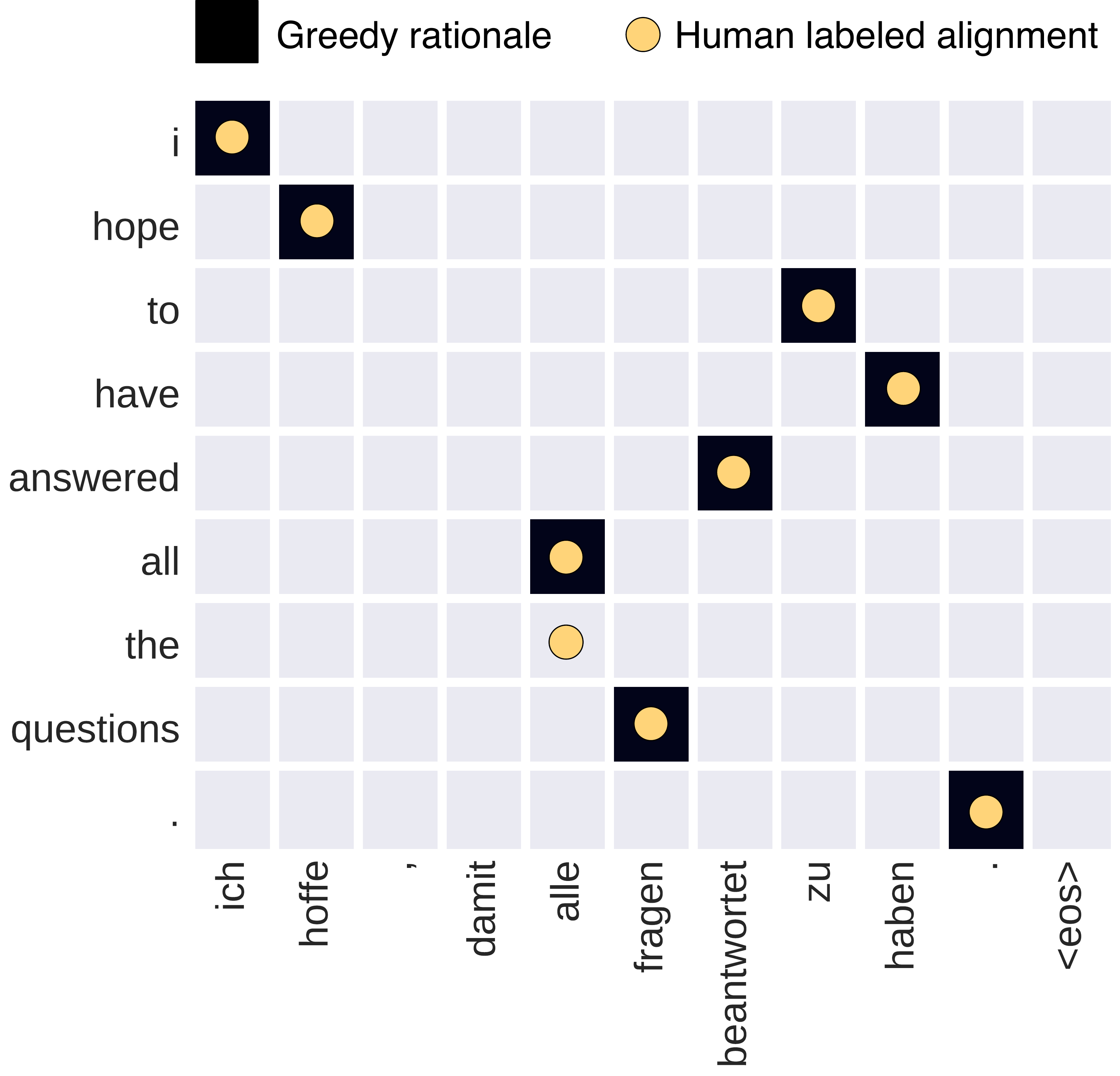}

  \caption{Greedy rationalization for machine translation. Each row depicts the source words contained in a rationale. Although each rationale includes both source and target words, here we only show source-side rationales so they can be compared to annotated alignments.}
  \label{fig:mt_example}
\end{figure}

\Cref{tab:mt_comparisons} contains the results. 
The rationales learned by greedy rationalization are more similar to human-labeled alignments than those provided by gradient and attention methods. Many methods have similar top-1 accuracies --- indeed, the best top-1 accuracy comes from averaging all attention layers. This reinforces the notion that although the baselines may be able to capture first-order information, they struggle to form sufficient rationales. \Cref{fig:mt_example} contains an example of greedy rationalization applied to machine translation, along with the human-labeled alignments.

 \section{Conclusion}
\label{sec:conclusion}

We proposed an optimization-based algorithm for rationalizing sequence predictions. Although exact optimization is intractable, we developed a greedy approach that efficiently finds good rationales. Moreover, we showed that models can be fine-tuned to form compatible distributions, thereby circumventing an intractable marginalization step. In experiments, we showed that the greedy algorithm is effective at optimization, and that its rationales are more faithful and plausible than those of gradient- and attention-based methods. 
We hope that our research, along with the release of an annotated dataset of sequence rationales, catalyzes further research into this area.

\paragraph{Acknowledgments} This work is funded by ONR N00014-17-1-2131, ONR N00014-15-1-2209, NSF CCF-1740833, DARPA SD2 FA8750-18-C-0130, Two Sigma, Amazon, and NVIDIA. Keyon Vafa is supported by the Cheung-Kong Innovation Doctoral Fellowship. Alexander Rush and Yuntian Deng are sponsored by NSF 1901030 and NSF CAREER 2037519. We also thank Mark Arildsen, Elizabeth Chen, Justin Chen, Katherine Chen, Nathan Daniel, Alexander Hem, Farzan Vafa, Neekon Vafa, Willy Xiao, and Carolina Zheng.

\bibliographystyle{acl_natbib}
\interlinepenalty=10000
\bibliography{draft}

\clearpage

\appendix

\section{Algorithm Details}
\label{sec:appendix_alg_details}
We present greedy rationalization in \Cref{alg:greedy_rationalization}.

\begin{algorithm}
\SetArgSty{textup}
\DontPrintSemicolon
\setstretch{1.08}
\KwIn{Sequence $y_{1:t}$ generated from $p$.}
\KwOut{Rationale $S$ for $y_t$.}
\textbf{Initialize:} $S = \emptyset$\\
\While{$\argmax_{y_t'} p(y_t'|y_S) \neq y_t$} {
  $k^* = \argmax_{k \in [t-1]\setminus S} \ p(y_t|y_{S\cup k})$\\
  $S = S \cup k^*$
}
\Return{$S$}
\caption{Greedy rationalization}
\label{alg:greedy_rationalization}
\end{algorithm}

Most sequence models, including transformers, use the representation of a token $y_{t-1}$ to predict the next token, $y_t$. As such, a rationale $S$ always needs to contain $y_{t-1}$. In practice, we initialize $S = \{y_{t-1}\}$.

This method and paradigm extend easily to conditional sequence models, such as those used in machine translation. In this setting, a model uses a source sequence $x_{1:N}$ to generate a target sequence $y_{1:T}$. Thus, a context for a prediction $y_t$ contains both $y_{<t}$ and $x_{1:N}$. The set of all possible rationales is the cross product of power sets $\mathcal{S} = 2^{[N]} \times 2^{[t-1]}$, and the combinatorial objective is
\begin{align*}
\label{eqn:appendix_conditional_combinatorial_rationale}
S(x_{1:N}, y_{1:t}) &= \argmin_{S_x, S_y \in \mathcal{S}} |S_x| + |S_y| \ \  \\
&\text{s.t.} \ \ \argmax_{y_t'} p(y_t' | x_{S_x}, y_{S_y}) = y_t.
\end{align*}

To perform greedy rationalization in this setting, we consider adding either a source token or a target token at each step, choosing the one that results in the largest increase in the full model's prediction.

\section{Optimality of Deterministic Rationales}
\label{sec:appendix_deterministic_proof}
Here, we show that the selection distribution $q(S|x,y)$ that maximizes the classification rationale objective in \Cref{eqn:general_rationale} is deterministic. We re-write the objective below:
\begin{equation}
\label{eqn:appendix_general_rationale}
\mathbb{E}_{x,y \sim F}\mathbb{E}_{S \sim q(S|x,y)}[\log p(y|x_S) - \lambda |S|].
\end{equation}
\begin{theorem}
For any $p(y|x_S)$, the $q(S|x,y)$ that maximizes \Cref{eqn:appendix_general_rationale} is a point-mass.   
\end{theorem}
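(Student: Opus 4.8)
The plan is to exploit the structural feature the main text already flagged: the decision variable $q$ appears only as the measure in the inner expectation, never inside the integrand. First I would peel off the outer expectation over $F$. Because $q(\cdot \mid x, y)$ can be specified independently for each conditioning pair $(x,y)$, and the outer expectation integrates the inner expectations against the nonnegative data measure $F$, maximizing \Cref{eqn:appendix_general_rationale} reduces to maximizing the inner expectation $\mathbb{E}_{S \sim q(S|x,y)}[\log p(y|x_S) - \lambda|S|]$ separately for each fixed $(x,y)$.

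Next I would fix $(x,y)$ and write $g(S) := \log p(y|x_S) - \lambda|S|$, emphasizing that $g$ is a function of $S$ alone and does not depend on $q$. The inner objective is then $\sum_{S \in \cS} q(S|x,y)\, g(S)$, a convex combination of the finitely many values $\{g(S): S \in \cS\}$, since $\cS = 2^{[T]}$ is finite and $q(\cdot|x,y)$ is a probability distribution over it. Any convex combination is bounded above by $\max_{S} g(S)$, and this bound is attained exactly when $q$ places all its mass on the set of maximizers. Choosing $S^\star \in \argmax_{S \in \cS} g(S)$, which exists by finiteness, and setting $q(\cdot|x,y) = \delta_{S^\star}$ achieves the pointwise maximum; assembling these point-masses over all $(x,y)$ then yields a maximizer of \Cref{eqn:appendix_general_rationale}.

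I expect the main obstacle to be not the inequality itself, which is elementary, but stating the claim precisely. When $g$ has multiple maximizers, non-deterministic $q$ supported on those maximizers are also optimal, so the sharp statement is that a point-mass \emph{attains} the optimum, rather than that every optimizer is a point-mass; I would phrase the conclusion accordingly. A secondary technical point, which I would address only briefly, is the measurable selection of $S^\star(x,y)$ needed to ensure the assembled $q$ is a valid conditional distribution when $F$ has continuous support. Since $\cS$ is finite, such a selection can be made measurable by a standard argument, and in any case it does not affect the value of the objective.
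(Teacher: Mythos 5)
Your proposal is correct and follows essentially the same argument as the paper: bound the expectation over $S \sim q(\cdot|x,y)$ by the maximum of $g(x,y,S)$, and observe that a point-mass at the argmax makes the bound tight. Your additional remarks---the explicit per-$(x,y)$ decomposition, the caveat that only \emph{some} optimizer need be deterministic when maximizers are non-unique, and the measurable-selection point---are refinements of the same proof rather than a different route, and indeed sharpen a statement the paper leaves slightly loose.
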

\begin{proof}
Denote by $g(x,y, S) = \log p(y|x_S) -\lambda |S|$:
\begin{align*}
\max_q \ \mathbb E_{x,y \sim F} &\mathbb E_{S \sim q(S|x,y)}[g(x,y,S)] \\
&\leq \max_q \mathbb E_{x,y \sim F} \max_S [g(x,y,S)] \\
&=\mathbb E_{x,y \sim F} \max_S [g(x,y,S)].
\end{align*}
The inequality uses the fact that the expectation of a random variable is bounded by its maximum value. When $q(S|x,y)$ is a point-mass at $\argmax_{S}[g(x,y,S)]$, the inequality becomes tight.
\end{proof}

The fact that the optimal rationale is deterministic for each example justifies using combinatorial strategies such as our objective in \Cref{eqn:combinatorial_rationale}.

\section{Efficiency}
\label{sec:appendix_computational_comparisons}

In \Cref{tab:theoretical_complexity}, we provide a detailed version of our complexity analysis from \Cref{sec:greedy}: For transformers, greedy rationalization can be performed at no extra asymptotic complexity if the rationale length is $O(t^{1/3})$ for a sequence length $t$.

\begin{table}
  \small
  \centering
  \begin{tabular}{ l c c c }
   \toprule
   Step & Complexity & Evaluations & Total \\
    \midrule
  $1$ & $1^2$ & $t$ & $1^2t$ \\
  $2$ & $2^2$ & $t-1$ & $2^2(t-1)$\\
  $\vdots$ & $\vdots$ & $\vdots$ & $\vdots$ \\
  $O(t^{1/3})$ & $O(t^{2/3})$ & $O(t)$ & $O(t^{5/3})$ \\
  \midrule
  Total &  & & $O(t^2)$\\
    \bottomrule
 \end{tabular}
 \caption{For transformers, the asymptotic complexity of greedy rationalization matches the asymptotic complexity of forming a single prediction on the full sequence, as long as the rationale size is $O(t^{1/3})$ for a sequence of length $t$.}
 \label{tab:theoretical_complexity}
\end{table}

We evaluate the computational efficiency of greedy rationalization in \Cref{tab:runtime_comparisons}. We compare greedy rationalization to an exhaustive search, which enumerates all possible context context subsets from shortest to longest to optimize \Cref{eqn:combinatorial_rationale}. To show the efficiency of evaluating transformers on arbitrarily sized inputs, we also compare to a version of greedy rationalization that evaluates a transformer on the full input. To make predictions on sparse subsets, this approach masks tokens that aren't in a candidate rationale during each attention step. In contrast, the efficient version of greedy rationalization only takes as input the tokens in the candidate rationale, so there is no need for masking.

We perform these comparisons on the templated analogies dataset of \citet{mikolov2013representations}. We use GPT-2 Large as our sequence model \citep{radford2019language} and perform each method on a single GPU. We compare the two greedy rationalization approaches for all of the examples for which the full model predicts the templated output. Since exhaustive search is intractable, we cannot perform it on every example due to computational constraints. Thus, we only run exhaustive search on examples where the optimal rationale has 6 or less tokens. In reality, the average runtime for exhaustive search is larger than the listed one.

\begin{table}
  \small
  \renewcommand{\arraystretch}{1.0}  
  \centering
  \begin{tabular}{ l c}
   \toprule 
  \textbf{Method} & \textbf{Time (s)} \\
    \midrule
  Exhaustive search & >60 \\
  Greedy rationalization with full inputs & 1.22 \\
  Greedy rationalization with sparse inputs & 0.30 \\
    \bottomrule
 \end{tabular}
 \caption{Greedy rationalization is efficient, especially when evaluating transformers on sparse inputs. We report the average wall clock time in seconds for finding rationales on the templated analogies dataset of \citet{mikolov2013representations}. 
 We cannot complete exhaustive search for the longer examples, so in reality the average runtime is larger than the listed one.}
 \label{tab:runtime_comparisons}
\end{table}

\section{Training and Fine-Tuning Details}
\label{sec:appendix_finetuning}

Our experiments consist of three models and datasets:
a transformer decoder \citep{vaswani2017attention} trained on a majority-class language, GPT-2 \citep{radford2019language} fine-tuned on Open WebText \citep{gokaslan2019openweb}, and a transformer machine translation model trained and fine-tuned with word dropout on IWSLT14 De-En \citep{cettolo2014report}.

For the majority-class language, we generate the dataset as described in \Cref{sub:modularity}. We include 50,000 examples in the training set, 5,000 in the validation set, and 5,000 in the test set.

We use a 4-layer transformer decoder with 2 attention heads per layer. We use an embedding dimension of 64, and a hidden dimension of 256 for the feedforward layers. This corresponds to 200,000 parameters. We train with $0.1$ weight dropout, and optimize using Adam \citep{kingma2014adam} with a learning rate of $0.005$ and an inverse square root learning rate scheduler. We use a warmup period of 4000 steps and an initial warmup learning rate of $10^{-7}$. We include a maximum of 64,000 tokens in each batch. We implement this model in Fairseq \citep{ott2019fairseq}. 

To approximate the compatibility objective in \Cref{eqn:word_dropout_objective}, we train with varying amounts of word dropout. In practice, this amounts to masking out each token we drop out at each attention layer. We use two levels of word dropout in \Cref{fig:majority_class_calibration}; none (which corresponds to training with the standard maximum likelihood objective in \Cref{eqn:setup_empirical_risk}) and $0.5$. We train each model on a single GPU. Each model takes less than 20,000 steps to converge, less than 90 minutes. \Cref{tab:finetuned_ppl} verifies that fine-tuning with word dropout does not hurt the heldout perplexity.

To fine-tune GPT-2, we use the pretrained GPT-2 Large model available on Hugging Face \citep{wolf2019huggingface}. This model has 774M parameters. We don't change any of the model settings when we fine-tune. Sampling context subsets uniformly at random as stated in the objective in \Cref{eqn:word_dropout_objective} results in a distribution of subsets heavily skewed towards those containing half the words in the sequence. This is fine for the majority-class language, since each sequence contains less than 20 tokens and thus all possible context sizes will be seen during training. However, GPT-2's sequence length is 1,024. 99\% of the time, sampling from the objective as stated would result in contexts with size 464-560. Notably, the probability of a context with less than 10 tokens is less than $10^{-284}$.

We make two adjustments to make sure the model is trained on both small and large subsets. With probability $0.5$, we condition on the full context. With the remaining $0.5$, we first randomly sample context sizes uniformly at random from $1$ to the sequence length. We then sample a random context subset of this size. This guarantees that all possible sequence lengths will be seen during training.

Since the WebText dataset used to train GPT-2 is not publicly available, we use Open WebText \citep{gokaslan2019openweb}, an open source reproduction effort. The corpus is in English. Rather than using the entire dataset, we take ``Subset 9'' and use the first 163M words. Our validation set is also from this subset and contains 160,000 words. We use a test set of 300,000 words from a different subset.

We fine-tune GPT-2 Large using Adam. We use a constant learning rate of $0.0001$, using a single batch per training step. We stop training after 62,500 steps. This takes 15 hours on a single GPU. \Cref{tab:finetuned_ppl} shows that fine-tuning with word dropout actually improves the heldout perplexity, although we believe that the improvement is due to our test set bearing more resemblance to the fine-tuning set than to the pretraining set. 

We use a standard transformer encoder/decoder to train a machine translation model on IWSLT14 De-En \citep{cettolo2014report}. We follow the preprocessing and model architecture recommended by Fairseq.\footnote{\url{https://github.com/pytorch/fairseq/tree/master/examples/translation}} The training set has 160,239 translation pairs, the validation set has 7,283, and the test set has 6,750. 

As for the model, both the encoder and decoder are transformers with 6 layers, 4 attention heads per layer, 512 embedding dimensions, and 1024 feedforward dimensions. This corresponds to 40M parameters. We train with $0.3$ weight dropout and $0.1$ label smoothing, using 4,096 tokens for each train step. We train with Adam with a learning rate of $5\times 10^{-4}$ and use an inverse square root learning rate scheduler with 4,000 warmup steps. 

When we fine-tune for compatibility, we again condition on the full context with probability $0.5$. With the remaining probability, we drop out each source and target token independently at each attention head with probability $1-1/T$, where $T$ is the sequence length (so the dropout probability varies for the source and target sequence). Although we drop out different tokens at each attention head of a layer, we make sure that the same tokens are dropped out at each layer. Our word dropout procedure ensures that our objective will be trained on small contexts since rationales for machine translation are typically very sparse. We fine-tune using Adam with a constant learning rate of $10^{-5}$ for 410,000 steps. The heldout BLEU scores of both models are equal; see \Cref{tab:finetuned_ppl}.

\begin{table*}
  \small
  \renewcommand{\arraystretch}{1.0}  
  \centering
  \begin{tabular}{ l l l c c}
   \toprule 
  Model & Dataset & Evaluation Metric &  Standard Training & Compatible Training \\
    \midrule
  Transformer decoder & Majority-Class & Perplexity & 1.8 & 1.8 \\
  GPT-2 & Open WebText & Perplexity & 18.3 & 17.1 \\
  Transformer encoder/decoder & IWSLT14 EnDe & BLEU & 34.8 & 34.8 \\
    \bottomrule
 \end{tabular}
 \caption{Fine-tuning for compatibility does not hurt heldout performance. The first two rows are language models and the evaluation metric is heldout perplexity; the last row is machine translation, for which the evaluation metric is BLEU.}
 \label{tab:finetuned_ppl}
\end{table*}

\section{Experimental Details}
\label{sec:appendix_experimental_details}

\subsection{Long-Range Agreement}

\Cref{tab:analogies_template} contains the template we used for the set of experiments containing the analogies from \citet{mikolov2013representations}. To avoid rationales containing partial antecedents, we only include examples where both words in the analogy correspond to single word-pieces using GPT-2's tokenizer. Since it only makes sense to rationalize correct predictions, we also only include the examples where GPT-2 correctly completes the analogy. In total, this results in 175 examples. Of these, we randomly sample 50 to perform exhaustive search, which we use to compute the approximation ratio of each method. Since we cannot run exhaustive search when the minimal sufficient rationale is too large, we use the 40 that converge with rationales of length 6 or less. We use 100 steps to approximate the path integral for the integrated gradients baseline \citep{sundararajan2017axiomatic}.

To confirm that the baseline performances are not being hindered by fine-tuning for compatibility, we re-run the experiment for each rationalization method on the pretrained GPT-2 Large, without any fine-tuning. The results are depicted in \Cref{tab:agreement_comparisons_no_finetuning}. As expected, the baselines perform even worse when GPT-2 is not fine-tuned to form compatible distributions. We do not include comparisons to exhaustive rationales because it is computationally infeasible to run exhaustive search on incompatible models, since optimization takes much longer to converge.

\begin{table*}
  \normalsize
  \setlength{\tabcolsep}{10pt}  
  \renewcommand{\arraystretch}{1.0}  
  \centering
  \makebox[\textwidth][c]{
  \begin{tabular}{ l p{120mm} }
  Relationship & Example \\ 
    \toprule
  Capital countries & When my flight landed in \textbf{Greece}, I converted my currency and slowly fell asleep. (I had a terrifying dream about my grandmother, but that's a story for another time). I was staying in the capital, \textbf{Athens} \\
  Currency & As soon as I arrived in \textbf{Japan}, I checked into my hotel and took a long nap. (I had finally finished the book I was reading and it was amazing). I had to figure out the exchange rate to the local currency, which is apparently called the \textbf{yen}\\
  City in state & As soon as I arrived in \textbf{Florida}, I checked into my hotel and watched a movie before falling asleep. (I had a great call with my husband, although I wish it were longer). I was staying in my favorite city, \textbf{Miami} \\
  Family & I initially invited my \textbf{uncles}, who gladly accepted my invitation. (My favorite song just came on, so I was able to relax). When I learned that women were allowed, I went ahead and also invited my \textbf{aunts} \\
  Opposite & I thought it was \textbf{pleasant}. (Just then an ad came on the TV, but that's irrelevant). It was the opposite of that: it was \textbf{unpleasant} \\
  Comparative & I knew it was \textbf{tall}, but that's before I saw it in person. (Just then I thought about my ex-wife, but I had to stop thinking about her). When I did end up seeing it in person, it was even \textbf{taller} \\
  Superlative & I thought it would be the \textbf{smallest} thing I'd ever encounter. (I tried to ignore my phone vibrating in my pocket). But when I did end up encountering it, it turned out it wasn't so \textbf{small}\\
  Present participle & Every other day, it started \textbf{working} in the morning. (I tried to remember the name of the woman at the bar). But today, it did not \textbf{work} \\
  Nationality adjective & I had never been friends with any \textbf{French} people before. (The funniest thing happened to me the other day, but that's a story for another time). In fact, I had never even been to \textbf{France}\\
  Past tense & Although I \textbf{listened} yesterday, I had a million things to do today. (I suddenly felt a pinched nerve, so I made a mental note to get that checked out). So today I wouldn't have time to do any more \textbf{listen} \\
  Plural & I really wanted to buy the \textbf{computer}, more than I ever wanted to buy anything before. (I was also behind on my homework, but that's another story). So I went to the store and asked if they had any \textbf{computers} \\
  Plural verbs & I can usually \textbf{sing} by myself. (I was so behind on work but I tried to distract myself). Although it's so much better when someone else also \textbf{sings} \\
  \bottomrule
\end{tabular}}
 \caption{Template using analogies from \citet{mikolov2013representations}.}
 \label{tab:analogies_template}
\end{table*}

\begin{table}
  \small
  \centering
  \begin{tabular}{ l  c c c}
  \toprule 
  & Length & Ante & No D  \\
    \midrule
  Gradient norms & 24.8 & 1.0 & 0.08 \\
  Gradient x embedding & 41.1 & 0.99 & 0.00  \\
  Integrated gradients & 34.9 & 1.0 & 0.00  \\
  Attention rollout & 38.4 & 1.0 & 0.05 \\
  Last attention layer & 20.1 & 0.99 & 0.03 \\
  All attention layers & 19.5 & 1.0 & 0.02 \\
  Greedy & 13.1 & 1.0 & 0.30 \\
    \bottomrule
 \end{tabular}
 \caption{The performance of each rationalization method on the templated version of the analogies dataset \citep{mikolov2013representations} when we don't fine-tune for compatibility. As expected, fine-tuning for compatibility (\Cref{tab:agreement_comparisons}) improves performance across the board.}
 \label{tab:agreement_comparisons_no_finetuning}
\end{table}

\subsection{Machine Translation}

For the distractor experiment, we randomly concatenate 500 pairs of source and target sequences generated by our fine-tuned model on the test set. We evaluate rationales by counting how many times they ``cross over'' and contain words from the distractor sequence. We do not penalize rationales that include special tokens like the beginning of sentence or end of sentence tokens.

For the alignment experiment, we use a public corpus of annotated rationales.\footnote{\url{https://www-i6.informatik.rwth-aachen.de/goldAlignment/}} Not every word in the dataset has an alignment, and some words have multiple alignments. Although the human annotations are on word-level alignments, our machine translation models are trained on subwords, so the rationales contain subwords in addition to full words. To make these comparable to the human annotations, we define the rationale of a full target word to contain the union of the subword rationales. Since each source word may also be a subword, we also take the union of source words in a rationale. To calculate top-1 accuracy, we define the rationale for a full word to be accurate if the rationales for any of the subwords in the rationale contain any source subwords that are in the annotated alignment. 

The alignment dataset contains both "sure" and "possible" alignments. These are used to differentiate between different errors when calculating the alignment error rate \citep{och2000improved}. For the other metrics, we include both kinds of alignments as part of the annotated alignments.

For both machine translation experiments, we use 50 steps to approximate the path integral for the integrated gradients baseline \citep{sundararajan2017axiomatic}.

\subsection{Annotated Lambada}

We work with volunteers to annotate Lambada \citep{paperno2016lambada}. Each example requires two annotators: a selector, and a predictor. A selector's goal is to choose the most important words for predicting the final word of a passage, known as the target word. Predictors will only be seeing the words chosen by a selector, and their goal is to predict the final word of the passage.

The selector first takes a passage and ranks 15 words. The top-ranked word always needs to be the word before the final word of the passage. They cannot select the final word of the passage. Each of their selections needs to be a complete word. They cannot select the same word twice, and they need to use all 15 spots. They know that a predictor will be predicting words, one-at-a-time, using the order they create.

When a selector is finished ranking the top 15 words, a predictor begins by seeing the top ranked word. They use this to predict the last word. Words are revealed one-at-a-time in the order chosen by the selector. Predictors can see how much space is between the words that have been revealed. Predictors are not told if they predicted a word correctly; the goal of the exercise is to capture the predictor's true predictions, so if they knew that previous guesses were incorrect, they may use this information to guess a new word at every step. If a predictor is not able to guess the target word at the end of the exercise, we re-assign the example to another predictor.

\Cref{fig:appendix_lambada_selector_instructions} contains an example given to selectors. \Cref{fig:appendix_lambada_predictor_instructions} contains an example given to predictors.

\begin{figure*}
  \includegraphics[width=1.0\textwidth]{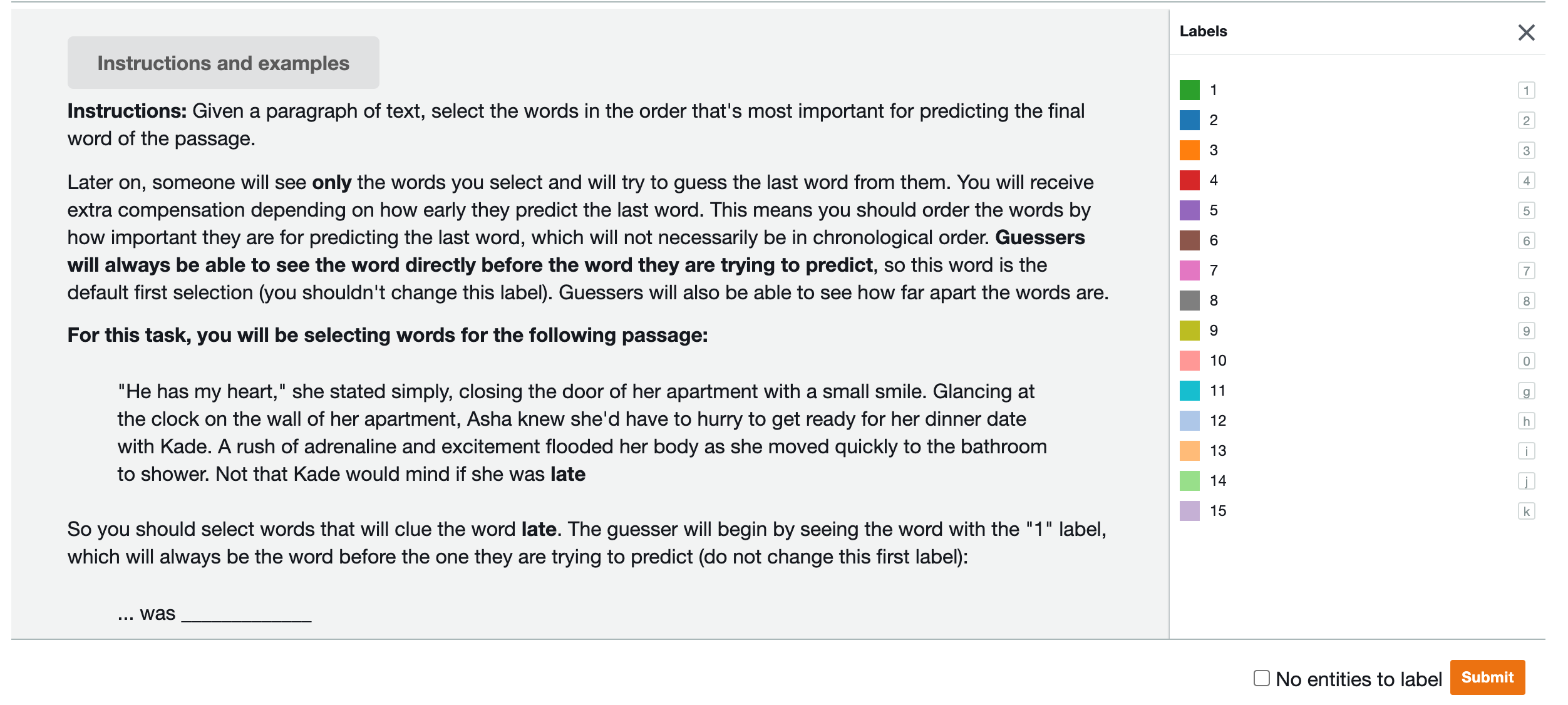}
  \includegraphics[width=1.0\textwidth]{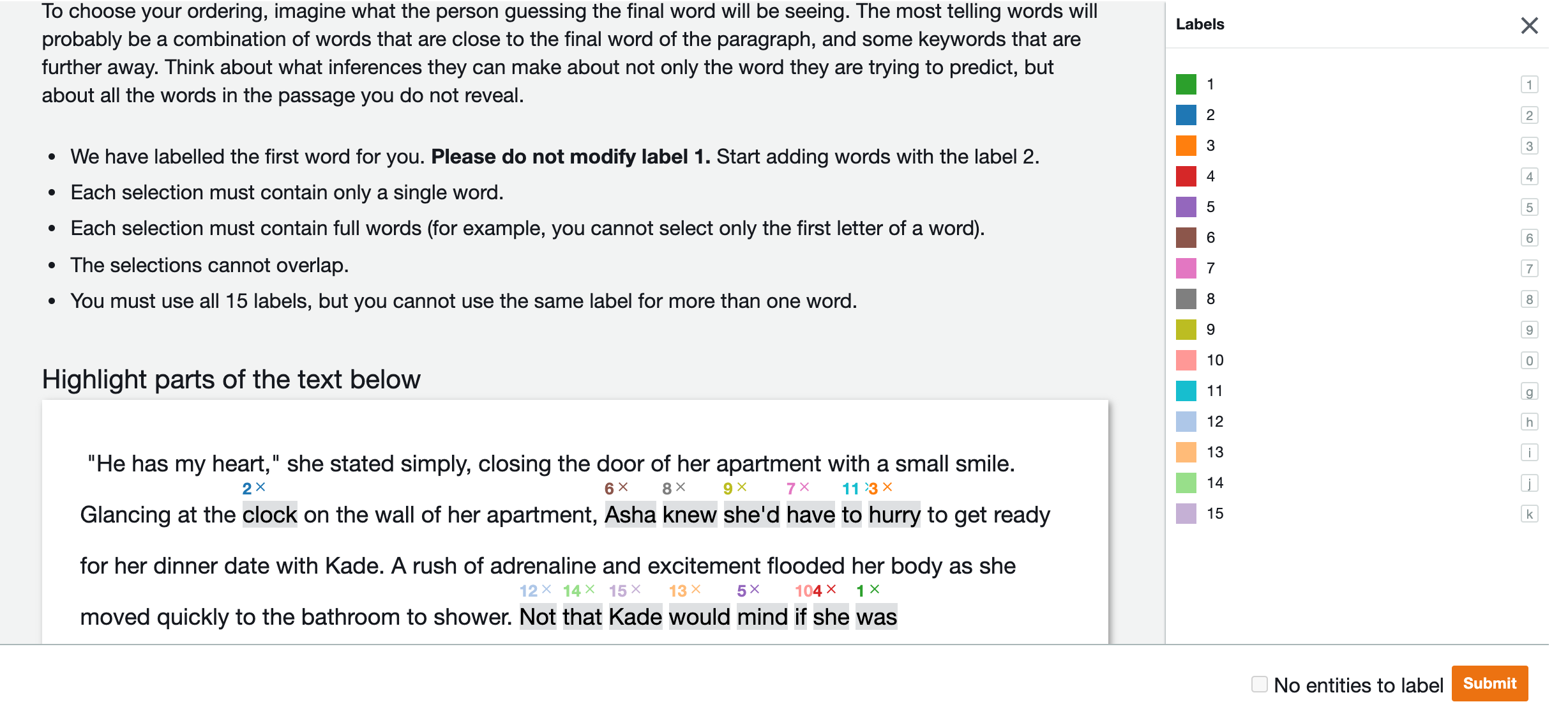}
  \caption{Sample instructions given to selectors to annotate Lambada.}
  \label{fig:appendix_lambada_selector_instructions}
\end{figure*}

\begin{figure*}
  \includegraphics[width=1.0\textwidth]{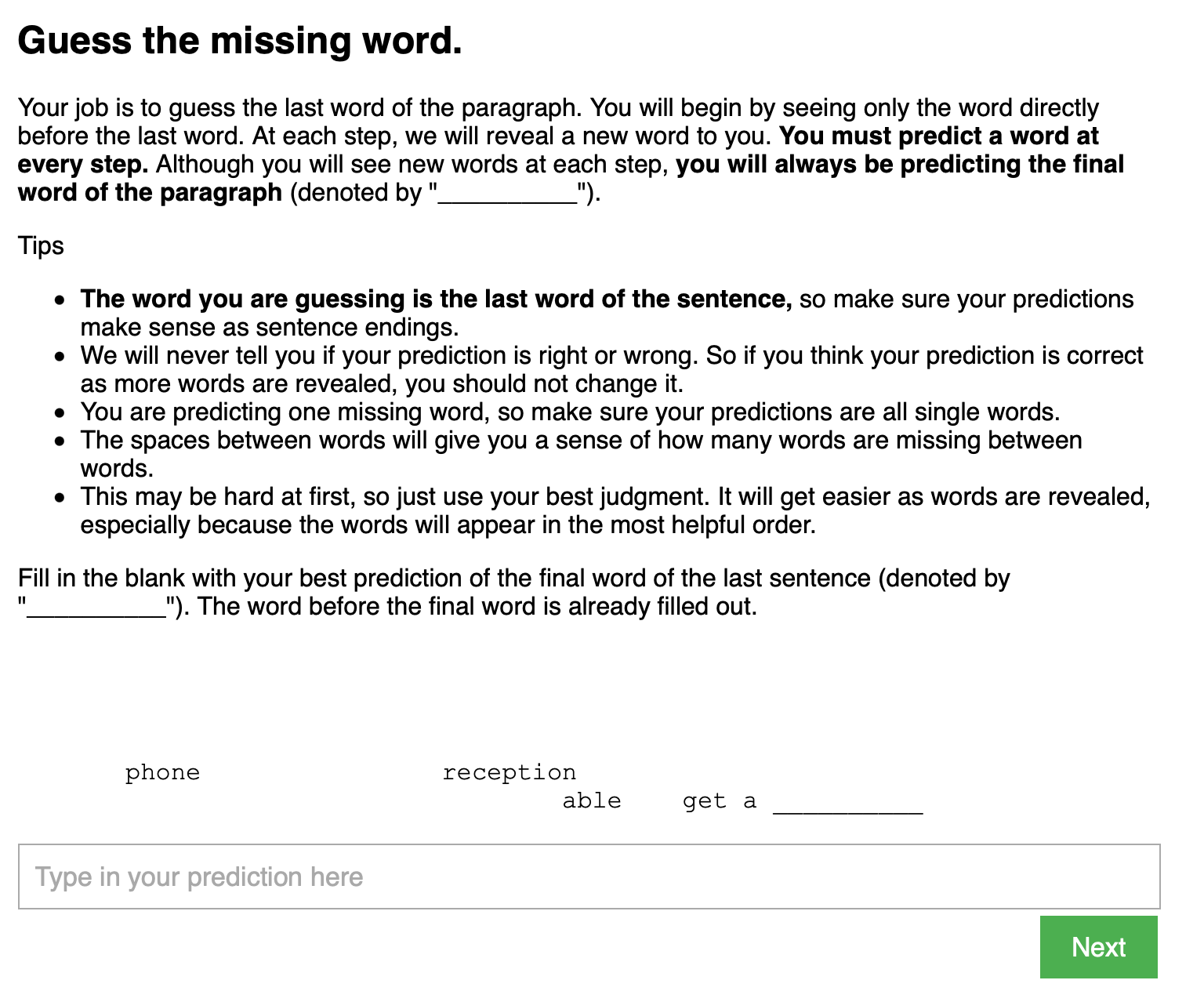}
  \caption{Sample instructions given to predictors to annotate Lambada.}
  \label{fig:appendix_lambada_predictor_instructions}
\end{figure*}

In total, we annotate 107 examples, and use all of them for the rationalization experiment. For each example, we define a human's rationale to be all the words that were revealed by the selector before the predictor first predicted the true target word or a synonym of it. The average rationale length is $6.0$.

To compare human rationales to those found by various methods, we first tokenize the text with GPT-2's tokenizer, and convert an annotated rationale to its set of corresponding subwords. Each method's rationale is also a set of subwords. We use set-comparison metrics like intersection over union (IOU) and F1 to compare the similarity of rationales. We use 100 steps to approximate the path integral for the integrated gradients baseline \citep{sundararajan2017axiomatic}.

\section{Qualitative Examples}
\label{sec:appendix_qualitative_examples}
\Cref{fig:appendix_lambada_examples} contains examples of rationales on our annotated Lambada dataset.

\begin{figure}
  \includegraphics[width=0.48\textwidth]{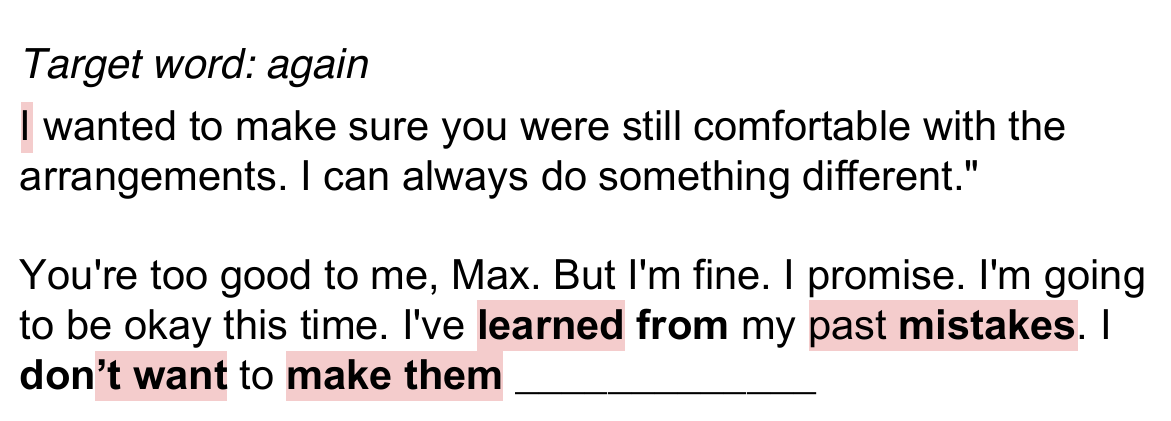}
  \includegraphics[width=0.48\textwidth]{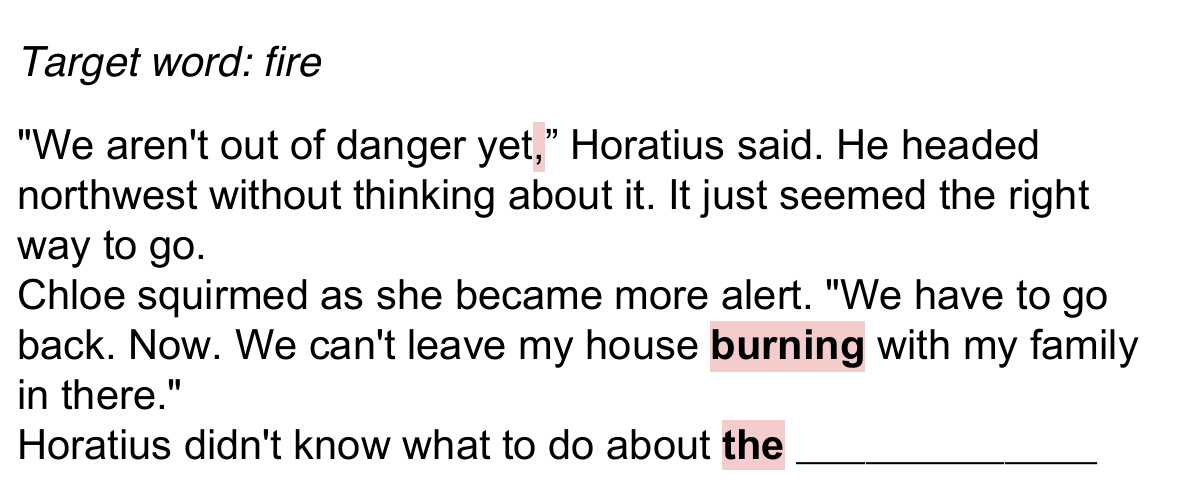}
  \includegraphics[width=0.48\textwidth]{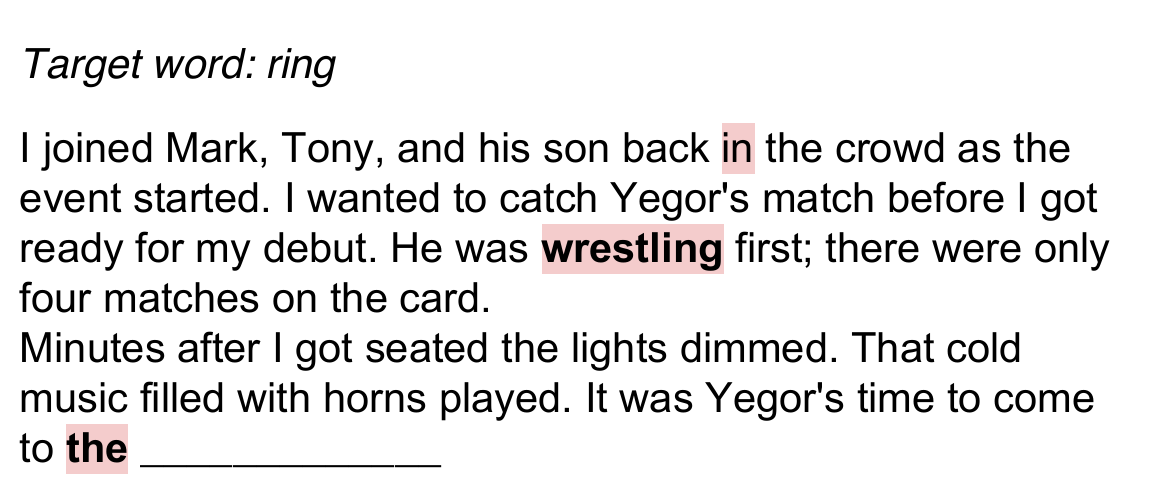}
  \includegraphics[width=0.48\textwidth]{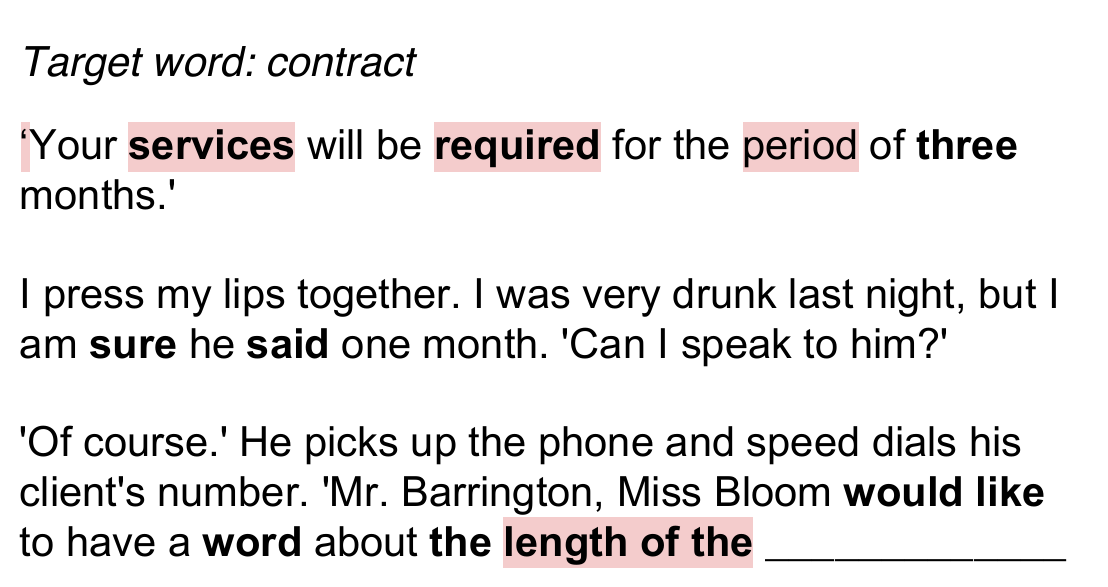}
  \caption{Sample rationales from our annotated Lambada dataset. \hl{Highlighted text} corresponds to greedy rationales, and \textbf{bolded text} corresponds to human annotated rationales.}
  \label{fig:appendix_lambada_examples}
\end{figure}

\end{document}